\documentclass[sigconf, screen, natbib=true,anonymous=false]{acmart}

\usepackage{hyperref}
\usepackage[noend]{algorithmic}
\usepackage{algorithm}
\usepackage{multirow}
\usepackage{amsmath}
\usepackage{bbm}
\usepackage{booktabs}
\usepackage[inline]{enumitem} 
\usepackage{subcaption}
\usepackage{bm} 
\usepackage{xcolor}

\usepackage{lipsum}

\usepackage{wrapfig}

\usepackage{arydshln}
\makeatletter
\def\adl@drawiv#1#2#3{
        \hskip.5\tabcolsep
        \xleaders#3{#2.5\@tempdimb #1{1}#2.5\@tempdimb}%
                #2\z@ plus1fil minus1fil\relax
        \hskip.5\tabcolsep}
\newcommand{\cdashlinelr}[1]{%
  \noalign{\vskip\aboverulesep
          \global\let\@dashdrawstore\adl@draw
          \global\let\ adl@draw\adl@drawiv}
  \cdashline{#1}
  \noalign{\global\let\adl@draw\@dashdrawstore
          \vskip\belowrulesep}}
\makeatother

\newcommand{\indep}{\perp \mkern-9.5mu \perp}

\usepackage{tikz}
\usetikzlibrary{automata, arrows, bayesnet, bending}

\usepackage{tikzscale}

\copyrightyear{2026}
\acmYear{2026}
\setcopyright{cc}
\setcctype{by}
\acmConference[SIGIR '26]{Proceedings of the 49th International ACM SIGIR Conference on Research and Development in Information Retrieval}{July 20--24, 2026}{Melbourne, VIC, Australia}
\acmBooktitle{Proceedings of the 49th International ACM SIGIR Conference on Research and Development in Information Retrieval (SIGIR '26), July 20--24, 2026, Melbourne, VIC, Australia}
\acmDOI{10.1145/3805712.3809837}
\acmISBN{979-8-4007-2599-9/2026/07}

\begin{document}
\title[Additive Control Variates Dominate Self-Normalisation]{Additive Control Variates Dominate Self-Normalisation~\\ in Off-Policy Evaluation}

\author{Olivier Jeunen}
\affiliation{
  \institution{aampe}
  \city{Antwerp}
  \country{Belgium}
}

\author{Shashank Gupta}
\authornote{Work done before joining Microsoft AI.}
\affiliation{
  \institution{Microsoft AI}
  \city{Bangalore}
  \country{India}
}

\begin{abstract}
Off-policy evaluation (OPE) is essential for assessing ranking and recommendation systems without costly online interventions.
Self-Normalised Inverse Propensity Scoring (SNIPS) is a standard tool for variance reduction in OPE, leveraging a multiplicative control variate.
Recent advances in off-policy learning suggest that additive control variates (baseline corrections) may offer superior performance, yet theoretical guarantees for evaluation are lacking.

This paper provides a definitive answer: we prove that $\beta^\star$-IPS, an estimator with an optimal additive baseline, asymptotically dominates SNIPS in Mean Squared Error.
By analytically decomposing the variance gap, we show that SNIPS is asymptotically equivalent to using a specific---but generally sub-optimal---additive baseline.
Our results theoretically justify shifting from self-normalisation to optimal baseline corrections for both ranking and recommendation.
\end{abstract}

\begin{CCSXML}
<ccs2012>
   <concept>
       <concept_id>cssClassifiers^500</concept_id>
       <concept_desc></concept_desc>
       <concept_significance>cssClassifiers^500</concept_significance>
       </concept>
   <concept>
       <concept_id>10002944.10011123.10011131</concept_id>
       <concept_desc>General and reference~Experimentation</concept_desc>
       <concept_significance>500</concept_significance>
       </concept>
   <concept>
       <concept_id>10002944.10011123.10011130</concept_id>
       <concept_desc>General and reference~Evaluation</concept_desc>
       <concept_significance>500</concept_significance>
       </concept>
   <concept>
       <concept_id>10010147.10010257.10010258.10010259.10003268</concept_id>
       <concept_desc>Computing methodologies~Ranking</concept_desc>
       <concept_significance>500</concept_significance>
       </concept>
 </ccs2012>
\end{CCSXML}

\ccsdesc[500]{General and reference~Experimentation}
\ccsdesc[500]{General and reference~Evaluation}
\ccsdesc[500]{Computing methodologies~Ranking}

\keywords{Off-policy evaluation, self-normalisation, baseline corrections, variance reduction, control variates}

\maketitle

\section{Introduction \& Motivation}
Search and recommender systems provide the most visible embodiment of Information Retrieval applications---serving a wide variety of use-cases, industries, and people.
A decision-centric view of such systems has recently emerged~\cite{Joachims2021}, leveraging ideas from the neighbouring fields of causal and counterfactual inference~\cite{CONSEQUENCES2022} to improve ranking and recommendation policies~\cite{Vasile2020,Saito2021}.
Off-Policy Evaluation and Learning (OPE/OPL) methods enable direct estimation and optimisation of online success metrics in an offline manner~\cite{Jeunen2021Thesis,Jeunen2023_nDCG}---as well as the treatment effect of a policy deployment~\cite{Jeunen2024_DeltaOPE}, which is typically a key performance criterion in real-world systems~\cite{kohavi2020trustworthy}.

Importance sampling and Inverse Propensity Scoring (IPS) are the cornerstones of practical OPE~\cite{vandenAkker2024}.
Standard IPS (the Horvitz--Thompson estimator~\cite{Horvitz1952}) enjoys unbiasedness at the cost of potentially high variance.
Self-normalised IPS (the H\'ajek estimator~\cite{Hajek1971}) leverages a multiplicative control variate to reduce this variance, at the cost of an asymptotically vanishing bias~\cite[\S 9.2]{Owen2013}.
Its effectiveness and parameter-free nature have rendered it a standard tool for OPE practitioners and researchers alike~\cite{Kong1992,Swaminathan2015, Gilotte2018,Joachims2018, Saito2021_OPE, gupta2025safe, gupta2024unbiased}, recently leading to proposals for off-policy ranking evaluation under the Item-Position Model~\cite{Li2018} as SNIPM~\cite{London2023}.

\citet{Joachims2018} show that, in OPL scenarios, the optimisation of the SNIPS objective can be equivalently reframed as optimising IPS with a specific additive control variate.
Building on this connection, \citet{Gupta2024} propose a generalised and unbiased estimator, $\beta$-IPS, and derive the optimal (variance-minimising) control variate $\beta^{\star}$.
Whilst their work empirically demonstrates the promising effectiveness of $\beta^{\star}$-IPS for OPE, definitive theoretical guarantees comparing its performance to self-normalisation remain absent. 

In this work, we bridge this theoretical gap between the common approach of self-normalisation and additive baseline corrections.
Our key contributions include:
\begin{enumerate}
    \item A formal proof that the $\beta^{\star}$-IPS estimator asymptotically dominates SNIPS in terms of Mean Squared Error (MSE).
    We show that SNIPS is asymptotically equivalent to an additive control variate estimator where the baseline is fixed to the true policy value $V(\pi)$, which is generally sub-optimal.
    \item A formal derivation that under linearisation (i.e. the Delta method), the asymptotic variance of SNIPS will always be lower-bounded by the variance of $\beta^{\star}$-IPS.
    \item An additive control variate estimator for rankings under the Item-Position Model: $\beta$-IPM, which is a natural extension of $\beta$-IPS to the ranking setting. We provide a formal proof that $\beta_{\indep}^{\star}$-IPM asymptotically dominates the SNIPM estimator in terms of MSE at every position in the ranking.
\end{enumerate}

\section{Off-Policy Evaluation, Background \& Notation}
We adopt an interventionist view to conceptualise general ranking applications, recommender systems, and Large Language Model (LLM) interfaces through the unified lens of \emph{policies}~\cite{Vasile2020,Joachims2021}.
A policy $\pi$ defines a conditional action distribution over actions $A$ (e.g. rankings, items, tokens), conditional on context $X$ (e.g. user profiles, session information, prompts): $\pi(a\mid x)\equiv\mathsf{P}(A=a\mid X=x,\Pi=\pi)$.
The \emph{value} of a policy is the expectation of the reward $R$ we obtain from deploying it:
\begin{equation}
    V(\pi) = \mathop{\mathbb{E}}\limits_{x \sim \mathsf{P}(X)}\left[\mathop{\mathbb{E}}\limits_{a \sim \pi(\cdot\mid x)}\left[ R \right ]  \right ].
\end{equation}
The reward $R$ here is general: it can comprise various explicit or implicit feedback signals, or multiple conflicting objectives~\cite{Sagtani2024,Jeunen2024_MultiObjective}.

We deal with \emph{off-policy} settings when we aim to estimate $V(\pi)$ from a dataset collected under a different policy $\pi_{0}$. The standard IPS estimator is unbiased under mild assumptions~\cite{Gilotte2018}:
\begin{equation}
    \hat{V}_{\rm IPS}(\pi,\mathcal{D}) = \frac{1}{|\mathcal{D}|}\sum_{(x,a,r)\in\mathcal{D}}\frac{\pi(a\mid x)}{\pi_0(a\mid x)}r.
\end{equation}
\citet{Swaminathan2015} suggest to leverage SNIPS for reduced variance, albeit at the cost of a finite-sample bias:
\begin{equation}
    \hat{V}_{\rm SNIPS}(\pi,\mathcal{D}) = \frac{\sum_{(x,a,r)\in\mathcal{D}}\frac{\pi(a\mid x)}{\pi_0(a\mid x)}r}{\sum_{(x,a,r)\in\mathcal{D}}\frac{\pi(a\mid x)}{\pi_0(a \mid x)}}.
\end{equation}
\citet{Joachims2018} demonstrate that direct optimisation of the $\hat{V}_{\rm SNIPS}$ objective is equivalent to optimising $\hat{V}_{\rm IPS}$ with a baseline correction, which \citet{Gupta2024} generalise to:
\begin{equation}
    \hat{V}_{\beta{\rm-IPS}}(\pi,\mathcal{D}) = \beta + \frac{1}{|\mathcal{D}|}\sum_{(x,a,r)\in\mathcal{D}}\frac{\pi(a\mid x)}{\pi_0(a\mid x)}(r-\beta).
\end{equation}
This estimator is unbiased, and the optimal (variance-minimising) correction $\beta^{\star}$ can be analytically computed from logged data $\mathcal{D}$.

The efficacy of an estimator is often measured through its Mean Squared Error (MSE), which decomposes as:
\begin{equation}
\mathrm{MSE}(\hat{V}) = \mathbb{E}\bigl[(\hat{V} - V(\pi))^2\bigr] = \mathrm{Bias}(\hat{V})^2 + \operatorname{Var}(\hat{V}).
\end{equation}

\section{Theoretical Contributions}
\subsection{$\beta^{\star}$-IPS dominates SNIPS in MSE}

\begin{theorem}[Asymptotic MSE comparison of $\beta^{\star}$-IPS and SNIPS]
\label{thm:mse-betaips-snips}
Let $\mathcal{D}=\{(x_i,a_i,r_i)\}_{i=1}^n$ be i.i.d.\ logged data generated under a logging
policy $\pi_0$. Let $w_i=\pi(a_i \mid x_i)/\pi_0(a_i \mid x_i)$ be the importance
weight for a target policy $\pi$. Assume:
\begin{enumerate}
    \item Rewards are bounded: $|r_i|\le R<\infty$.
    \item Importance weights are bounded: $0\le w_i\le W<\infty$.
    \item $\mathbb{E}[w_i]=1$ and $\mathbb{E}[w_i^2]<\infty$.
\end{enumerate}
Let $\beta^{\star}$ denote the baseline that minimises the mean squared error (MSE)
within the family of estimators with a global additive control variate
(cf.\ Eq.~(38) in~\cite{Gupta2024}). Then
\[
\mathrm{MSE}\!\left(\hat V_{\beta^{\star}\mathrm{-IPS}}\right)
\le
\mathrm{MSE}\!\left(\hat V_{\mathrm{SNIPS}}\right) + O(n^{-2}),
\]
and in particular $\beta^{\star}$-IPS is asymptotically no worse in MSE than SNIPS.
Moreover, if $\beta^{\star}\neq V(\pi)$ and $\mathrm{Var}(w_i)>0$, the inequality is
strict for all sufficiently large $n$.
\end{theorem}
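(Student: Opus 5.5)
The plan is to compare the two estimators through their asymptotic MSE expansions up to order $n^{-1}$, and to show that the $O(n^{-1})$ term for SNIPS is the variance of a $\beta$-IPS estimator with the particular baseline $\beta=V(\pi)$. Write $V=V(\pi)$, $\mu_{wr}=\mathbb{E}[w_ir_i]=V$ and $\bar w=\frac1n\sum_i w_i$.

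\textbf{Step 1: the $\beta$-IPS side is exact.} Since $\hat V_{\beta\mathrm{-IPS}}$ is unbiased for every fixed $\beta$, we have
\[
\mathrm{MSE}(\hat V_{\beta\mathrm{-IPS}})=\tfrac1n\,\sigma^2(\beta),
\qquad
\sigma^2(\beta)=\mathrm{Var}\bigl(w_i(r_i-\beta)\bigr).
\]
The function $\sigma^2(\beta)$ is a convex quadratic in $\beta$. It is minimised at
\[
\beta^\star=\frac{\mathbb{E}[w_i^2r_i]-V\,\mathbb{E}[w_i^2]+V}{\mathrm{Var}(w_i)}
\]
(equivalently $\mathrm{Cov}(w_ir_i,w_i)/\mathrm{Var}(w_i)$ after using $\mathbb{E}[w_i]=1$), which is Eq.~(38) of \cite{Gupta2024}. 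When $\mathrm{Var}(w_i)>0$ the quadratic has leading coefficient $\mathrm{Var}(w_i)$, which gives the identity
\[
\sigma^2(\beta)=\sigma^2(\beta^\star)+\mathrm{Var}(w_i)\,(\beta-\beta^\star)^2 .
\]
If $\beta^\star$ has to be estimated from data, I would note that plugging in $\hat\beta$ only perturbs the MSE at order $n^{-2}$, using boundedness and $\hat\beta-\beta^\star=O_p(n^{-1/2})$. This is the reason for the $O(n^{-2})$ slack in the statement. Under the statement, however, $\beta^\star$ is the population minimiser, so this remark is optional.

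\textbf{Step 2: expand SNIPS.} Write
\[
\hat V_{\mathrm{SNIPS}}-V=\frac{\frac1n\sum_i w_i(r_i-V)}{\bar w}=\frac{Z_n}{\bar w},
\qquad Z_n=\tfrac1n\sum_i w_i(r_i-V),
\]
where $\mathbb{E}[Z_n]=0$ and $\mathbb{E}[Z_n^2]=\sigma^2(V)/n$. Expand $1/\bar w=1-(\bar w-1)+(\bar w-1)^2/\bar w$. This gives
\[
(\hat V_{\mathrm{SNIPS}}-V)^2=Z_n^2-2Z_n^2(\bar w-1)+\text{higher-order terms}.
\]
Taking expectations and using moment bounds for i.i.d.\ bounded variables, $\mathbb{E}|Z_n|^k\,|\bar w-1|^m=O(n^{-(k+m)/2})$, all but the first term are $O(n^{-3/2})$, and in fact $O(n^{-2})$ because odd central moments of sums contribute one order less. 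Hence
\[
\mathrm{MSE}(\hat V_{\mathrm{SNIPS}})=\tfrac1n\,\sigma^2(V)+O(n^{-2}).
\]

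\textbf{Step 3: compare.} Combining the two steps,
\[
\mathrm{MSE}(\hat V_{\mathrm{SNIPS}})-\mathrm{MSE}(\hat V_{\beta^\star\mathrm{-IPS}})=\tfrac1n\,\mathrm{Var}(w_i)\,(V-\beta^\star)^2+O(n^{-2}).
\]
This yields the main inequality. When $\beta^\star\neq V$ and $\mathrm{Var}(w_i)>0$, the leading term is a positive constant times $n^{-1}$, so it dominates the $O(n^{-2})$ remainder for large $n$ and the inequality is strict.

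\textbf{Main obstacle.} The hard step is the rigorous control of the remainder in Step 2. The difficulty is the denominator $\bar w$, which can be near $0$, since weights may vanish. I would handle it by splitting on the event $\{\bar w\ge 1/2\}$. On that event the Taylor remainder is bounded by a constant times $|Z_n|^2|\bar w-1|^2$. The complementary event has probability $\le e^{-cn}$ by Hoeffding, using $0\le w_i\le W$. On that event SNIPS is still bounded by $R$ (with the convention $0/0:=0$), because it is a convex combination of rewards.

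The other delicate point is upgrading the cross term $\mathbb{E}[Z_n^2(\bar w-1)]$ to $O(n^{-2})$. It is a normalised third-order joint moment of centred i.i.d.\ sums, equal to $n^{-2}\,\mathbb{E}[w_1^2(r_1-V)^2(w_1-1)]$. It is therefore $O(n^{-2})$ exactly, and the remaining terms are fourth-order and likewise $O(n^{-2})$.
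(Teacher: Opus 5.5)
Your proposal is correct and follows essentially the paper's route: your $Z_n$ is the paper's $L_n$ (so $Z_n/\bar w=L_n+R_n$), and your Step 2 mirrors the paper's Steps 2--3, namely the split on $\{\bar w\ge 1/2\}$ with Hoeffding, the fourth-moment bound on the quartic remainder, and the exact cross term $n^{-2}\mathbb{E}[w_1^2(r_1-V)^2(w_1-1)]$; your appeal to $|\hat V_{\mathrm{SNIPS}}|\le R$ on the bad event also supplies the boundedness the paper glosses over there. One slip to fix: the numerator in your displayed $\beta^\star$ should be $\mathbb{E}[w_i^2r_i]-V$, not $\mathbb{E}[w_i^2r_i]-V\,\mathbb{E}[w_i^2]+V$ (your version equals $\beta^\star-V+V/\mathrm{Var}(w_i)$), though your parenthetical $\mathrm{Cov}(w_ir_i,w_i)/\mathrm{Var}(w_i)$ is right and only the quadratic identity is used downstream.
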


\begin{proof}
We write $V=V(\pi)$ for brevity and define the empirical means
\[
\bar X \triangleq \frac{1}{n}\sum_{i=1}^n w_i r_i,
\qquad
\bar W \triangleq \frac{1}{n}\sum_{i=1}^n w_i .
\]
Then $\hat V_{\mathrm{SNIPS}}=\bar X/\bar W$ and
\[
\hat V_{\beta\mathrm{-IPS}}(\beta)
=
\beta + \frac{1}{n}\sum_{i=1}^n w_i(r_i-\beta)
=
\beta + \bar X - \beta\bar W .
\]
In particular, for $\beta=V$,
\begin{equation}
\hat V_{\beta\mathrm{-IPS}}(V)
=
V + \bar X - V\bar W .
\label{eq:betaipsV}
\end{equation}

\medskip
\noindent\textbf{Step 1: Exact decomposition.}
Let $L_n \triangleq \bar X - V\bar W$.
We obtain the exact identity
\begin{align}
\hat V_{\mathrm{SNIPS}} - \hat V_{\beta\mathrm{-IPS}}(V)
&=
\frac{\bar X}{\bar W} - \bigl(V+\bar X - V\bar W\bigr) \nonumber\\
&=
L_n\Bigl(\frac{1}{\bar W}-1\Bigr) \nonumber\\
&=
L_n\cdot \frac{1-\bar W}{\bar W}. \label{eq:Rn}
\end{align}
Define the remainder random variable
\begin{align}
R_n \triangleq L_n\cdot \frac{1-\bar W}{\bar W}.
\label{eq:rn}
\end{align}
Then
\begin{equation}\label{eq:snips_decomp}    
\hat V_{\mathrm{SNIPS}}
=
\hat V_{\beta\mathrm{-IPS}}(V) + R_n .
\end{equation}

\medskip
\noindent\textbf{Step 2: Bounding the remainder.}
Let $\mathcal{E}=\{\bar W\ge 1/2\}$. Since $0\le w_i\le W$ and
$\mathbb{E}[w_i]=1$, Hoeffding’s inequality~\cite[Thm.~2.8]{Boucheron2013} gives
\begin{align}
\mathbb{P}(\mathcal{E}^c)
=
\mathbb{P}(\bar W-1\le -1/2)
\le
\exp\Bigl(-\frac{n}{2W^2}\Bigr).
\label{eq:heof_ineq}
\end{align}
On $\mathcal{E}$ we have $\bar W\ge 1/2$ and hence
\[
\Bigl|\frac{1-\bar W}{\bar W}\Bigr|\le 2|1-\bar W|.
\]
Therefore, on $\mathcal{E}$,
\[
R_n^2 \le 4 L_n^2(1-\bar W)^2 .
\]
Splitting expectations on $\mathcal{E}$ yields
\begin{align}
\mathbb{E}[R_n^2]
&\le
4\,\mathbb{E}\!\left[L_n^2(1-\bar W)^2\right]
+
\mathbb{E}\!\left[R_n^2\mathbf{1}\{\mathcal{E}^c\}\right].
\label{eq:split}
\end{align}
To simplify notation, define the centred random variables:
\begin{align}
U_i \triangleq w_i(r_i - V), \qquad T_i \triangleq w_i - 1.
\label{eq:UTvar}
\end{align}
These variables correspond to the terms in our averages, such that $L_n = \frac{1}{n}\sum_{i=1}^n U_i$ and $1-\bar W = -\frac{1}{n}\sum_{i=1}^n T_i$.

Given the boundedness of rewards ($|r_i|\le R$) and weights ($0\le w_i\le W$), these new variables are also bounded:
\[
|U_i| \le RW(1+W), \qquad |T_i| \le W.
\]
Since $U_i$ and $T_i$ are zero-mean and bounded, we can bound the fourth moments of their averages. 
By Rosenthal's inequality (or standard moment expansions for i.i.d.\ sums, see e.g. \cite[Thm.~15.11]{Boucheron2013}), there exist finite constants $C_U, C_T$ such that:
\[
\mathbb{E}[L_n^4] \le \frac{C_U}{n^2}, \qquad \mathbb{E}[(1-\bar W)^4] \le \frac{C_T}{n^2}.
\]
We now bound the expectation of the product using the Cauchy--Schwarz inequality:
\[
\mathbb{E}\!\left[L_n^2(1-\bar W)^2\right]
\le
\sqrt{\mathbb{E}[L_n^4]} \sqrt{\mathbb{E}[(1-\bar W)^4]}
\le
\frac{\sqrt{C_U C_T}}{n^2}.
\]
Finally, observing that the case $\bar{W}=0$ occurs with exponentially vanishing probability, and combining the polynomial moment bound with the exponential decay of the failure probability $\mathbb{P}(\mathcal{E}^c)$, we conclude:
\[
\mathbb{E}[R_n^2] = O(n^{-2}).
\]

\medskip
\noindent\textbf{Step 3: MSE expansion.}
From $\hat V_{\mathrm{SNIPS}}=\hat V_{\beta\mathrm{-IPS}}(V)+R_n$, the MSE decomposes as
\begin{align*}
\mathrm{MSE}(\hat V_{\mathrm{SNIPS}})
&=
\mathbb{E}\!\left[\bigl(\hat V_{\beta\mathrm{-IPS}}(V)-V + R_n\bigr)^2\right] \\
&=
\mathrm{MSE}\!\left(\hat V_{\beta\mathrm{-IPS}}(V)\right)
+2\mathbb{E}\!\left[\bigl(\hat V_{\beta\mathrm{-IPS}}(V)-V\bigr)R_n\right]
+\mathbb{E}[R_n^2].
\end{align*}
From Eq.~\ref{eq:betaipsV}, $\hat V_{\beta\mathrm{-IPS}}(V)-V=L_n$, therefore the cross-term becomes\\
\[
    2\mathbb{E}\!\left[\bigl(\hat V_{\beta\mathrm{-IPS}}(V)-V\bigr)R_n\right]=2\mathbb{E}[L_n R_n]=2\mathbb{E}\left[L_n^2\frac{1-\bar W}{\bar W}\right].\
\]
We now define a bound on the cross term. Using $R_n \approx L_n(1-\bar{W})$ on the high-probability event $\mathcal{E}$ (as $\lim_{n\to\infty}\bar W =1$), the dominant component of the cross term is:
\[
\mathbb{E}[L_n^2(1-\bar{W})]
=
\mathbb{E}\!\left[ \left(\frac{1}{n}\sum_{i=1}^n U_i\right)^2 \left(-\frac{1}{n}\sum_{j=1}^n T_j\right) \right].
\]
Expanding the product of sums yields a triple summation:
\[
-\frac{1}{n^3} \sum_{i=1}^n \sum_{k=1}^n \sum_{j=1}^n \mathbb{E}[U_i U_k T_j].
\]
Since $U_i$ and $T_j$ are zero-mean variables, terms in this sum vanish unless all indices coincide (i.e., $i=k=j$).
For any distinct indices, the expectation factorises and zeroes out (e.g., if $j \neq i,k$, then $\mathbb{E}[U_i U_k]\mathbb{E}[T_j]=0$).
Only $n$ diagonal terms remain:
\[
\mathbb{E}[L_n^2(1-\bar{W})]
=
-\frac{1}{n^3} \sum_{i=1}^n \mathbb{E}[U_i^2 T_i]
=
O(n^{-2}).
\]
This confirms that both the cross-term and $\mathbb{E}[R_n^2]$ decay at a rate of $O(n^{-2})$, and hence the total approximation error is also $O(n^{-2})$:
\[
\mathrm{MSE}(\hat V_{\mathrm{SNIPS}})
=
\mathrm{MSE}\!\left(\hat V_{\beta\mathrm{-IPS}}(V)\right)
+ O(n^{-2}).
\]

\medskip
\noindent\textbf{Step 4: Comparison with the optimal baseline.}
By definition, $\beta^{\star}$-IPS minimises the variance---and hence MSE---within the
class of offline estimators with a global additive control variate. Therefore,
\[
\mathrm{MSE}\left(\hat{V}_{\beta-\mathrm{IPS}}(\beta^{\star})\right)
\le
\mathrm{MSE}\!\left(\hat V_{\beta\mathrm{-IPS}}(V)\right).
\]
Combining with the previous step gives
\[
\mathrm{MSE}\!\left(\hat V_{\beta^{\star}\mathrm{-IPS}}\right)
\le
\mathrm{MSE}\!\left(\hat V_{\mathrm{SNIPS}}\right) + O(n^{-2}).
\]
Finally, if $\beta^{\star}\neq V$ and $\mathrm{Var}(w_i)>0$, the gap between the two
additive estimators is of order $1/n$ (from the difference in variance), whereas the remainder term is of order
$1/n^2$. This implies strict dominance of $\beta^{\star}$-IPS over SNIPS for all
sufficiently large $n$.
\end{proof}

\subsection{$\beta^{\star}$-IPS reduces SNIPS' asymptotic variance}
\begin{proposition}[Analytical Variance Gap]
\label{prop:variance-gap}
Let $\sigma_w^2 = \mathrm{Var}(w_i)$, $\sigma_{wr}^2 = \mathrm{Var}(w_i r_i)$, and $\sigma_{w,wr} = \mathrm{Cov}(w_i, w_i r_i)$.
The variance of the $\beta$-IPS estimator is
\[
\mathrm{Var}\!\left(\hat V_{\beta\mathrm{-IPS}}(\beta)\right)
= \frac{1}{n}\Bigl( \sigma_{wr}^2 - 2\beta \sigma_{w,wr} + \beta^2 \sigma_w^2 \Bigr),
\]
which is minimised by $\beta^{\star} = \sigma_{w,wr} / \sigma_w^2$.
The asymptotic variance of SNIPS (derived via the Delta method~\cite[Eq.~9.8]{Owen2013}) is
\[
\mathrm{AVar}\!\left(\hat V_{\mathrm{SNIPS}}\right)
= \frac{1}{n}\Bigl( \sigma_{wr}^2 - 2V(\pi) \sigma_{w,wr} + V(\pi)^2 \sigma_w^2 \Bigr).
\]
The variance gap $\Delta \triangleq \mathrm{AVar}(\hat V_{\mathrm{SNIPS}}) - \mathrm{Var}(\hat V_{\beta^{\star}\mathrm{-IPS}})$ is given exactly by:
\begin{equation}
\Delta = \frac{1}{n}\, \frac{\bigl(V(\pi)\sigma_w^2 - \sigma_{w,wr}\bigr)^2}{\sigma_w^2} \ge 0.
\end{equation}
Consequently, $\hat V_{\beta^{\star}\mathrm{-IPS}}$ strictly dominates SNIPS in variance unless $V(\pi) = \beta^{\star}$.
\end{proposition}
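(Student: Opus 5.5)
The plan is a direct second-moment computation followed by completing the square, with the SNIPS part obtained either from the Delta method or, more economically, from the decomposition in Eq.~\eqref{eq:snips_decomp} established in the proof of Theorem~\ref{thm:mse-betaips-snips}.

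\textbf{Variance of $\beta$-IPS.} As in that proof, write $\hat V_{\beta\mathrm{-IPS}}(\beta)=\beta+\bar X-\beta\bar W$. Here $\beta$ is deterministic and the pairs $(w_i r_i, w_i)$ are i.i.d., so
\[
\mathrm{Var}\!\left(\hat V_{\beta\mathrm{-IPS}}(\beta)\right)=\frac{1}{n}\mathrm{Var}(w_i r_i-\beta w_i)=\frac{1}{n}\bigl(\sigma_{wr}^2-2\beta\sigma_{w,wr}+\beta^2\sigma_w^2\bigr).
\]
This is a convex quadratic in $\beta$ whenever $\sigma_w^2>0$. Setting its derivative to zero gives $\beta^\star=\sigma_{w,wr}/\sigma_w^2$. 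Substituting back yields the minimum value
\[
\mathrm{Var}\!\left(\hat V_{\beta^\star\mathrm{-IPS}}\right)=\frac{1}{n}\bigl(\sigma_{wr}^2-\sigma_{w,wr}^2/\sigma_w^2\bigr).
\]
One caveat: $\beta^\star$ is treated as a fixed population quantity. If it is estimated from $\mathcal{D}$, the extra error is of lower order, and I would note this rather than prove it.

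\textbf{Asymptotic variance of SNIPS.} I would apply the Delta method to $g(\bar X,\bar W)=\bar X/\bar W$ at the mean point $(V,1)$, using $\mathbb{E}[w_i]=1$ and $\mathbb{E}[w_ir_i]=V$. The gradient there is $(1,-V)$, so the linearisation is $V+(\bar X-V)-V(\bar W-1)=V+\bar X-V\bar W$. This is exactly $\hat V_{\beta\mathrm{-IPS}}(V)$ from Eq.~\eqref{eq:betaipsV}.

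Hence $\mathrm{AVar}(\hat V_{\mathrm{SNIPS}})$ equals the $\beta$-IPS variance formula evaluated at $\beta=V(\pi)$. This matches the stated expression. It is consistent with Theorem~\ref{thm:mse-betaips-snips}, where the remainder $R_n$ contributes only $O(n^{-2})$.

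\textbf{The gap.} Let $f(\beta)=\sigma_{wr}^2-2\beta\sigma_{w,wr}+\beta^2\sigma_w^2$. Completing the square gives
\[
f(\beta)=\sigma_w^2(\beta-\beta^\star)^2+f(\beta^\star).
\]
Therefore
\[
\Delta=\frac{1}{n}\sigma_w^2\bigl(V-\sigma_{w,wr}/\sigma_w^2\bigr)^2=\frac{1}{n}\frac{(V\sigma_w^2-\sigma_{w,wr})^2}{\sigma_w^2}\ge 0.
\]
Equality holds iff $V=\beta^\star$, which gives the strict-dominance claim.

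\textbf{Main obstacle.} The algebra is routine. The delicate point is stating precisely what ``asymptotic variance'' means for SNIPS: it is the leading $1/n$ term, not the exact finite-sample variance. I would justify it by citing the Delta method (Owen, Eq.~9.8) under the finite second moments assumed in Theorem~\ref{thm:mse-betaips-snips}. I would also point out that the derivation requires $\sigma_w^2>0$; otherwise $w_i\equiv1$, every estimator in the family coincides, and $\Delta$ is read as zero.
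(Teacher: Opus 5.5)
Your proof is correct and takes essentially the paper's route: substitute $\beta^\star=\sigma_{w,wr}/\sigma_w^2$, subtract the result from the Delta-method variance, and recognise a perfect square. Your completing-the-square identity $f(\beta)=\sigma_w^2(\beta-\beta^\star)^2+f(\beta^\star)$ is the same algebra made explicit. Beyond the paper, you also derive the two variance formulas it only states, and you correctly flag the $\sigma_w^2>0$ requirement that the paper uses without comment.
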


\begin{proof}
Substituting $\beta^{\star} = \sigma_{w,wr}/\sigma_w^2$ into the variance expression for $\beta$-IPS yields the optimal variance:
\begin{align}
\mathrm{Var}\!\left(\hat V_{\beta^{\star}\mathrm{-IPS}}\right)
&= \frac{1}{n}\left( \sigma_{wr}^2 - 2\frac{\sigma_{w,wr}}{\sigma_w^2}\sigma_{w,wr} + \left(\frac{\sigma_{w,wr}}{\sigma_w^2}\right)^2 \sigma_w^2 \right) \nonumber \\
&= \frac{1}{n}\left( \sigma_{wr}^2 - \frac{\sigma_{w,wr}^2}{\sigma_w^2} \right). \label{eq:opt-var}
\end{align}
Subtracting~\eqref{eq:opt-var} from the SNIPS asymptotic variance yields:
\begin{align*}
\Delta
&= \frac{1}{n} \left( V(\pi)^2\sigma_w^2 - 2V(\pi)\sigma_{w,wr} + \frac{\sigma_{w,wr}^2}{\sigma_w^2} \right) \\
&= \frac{1}{n\sigma_w^2} \left( V(\pi)^2\sigma_w^4 - 2V(\pi)\sigma_w^2 \sigma_{w,wr} + \sigma_{w,wr}^2 \right) \\
&= \frac{1}{n\sigma_w^2} \bigl( V(\pi)\sigma_w^2 - \sigma_{w,wr} \bigr)^2 .
\end{align*}
Since $\sigma_w^2 > 0$ and the numerator is a perfect square, $\Delta \ge 0$. The gap is zero if and only if $V(\pi)\sigma_w^2 = \sigma_{w,wr}$, which is equivalent to $V(\pi) = \sigma_{w,wr}/\sigma_w^2 = \beta^{\star}$.
\end{proof}

\subsection{$\beta_{\indep}^\star$-IPM dominates SNIPM at every position}
Real-world applications often present users with a ranked list of results, instead of selecting a single action.
Considering rankings as atomic actions leads to a combinatorial explosion of the action space, prohibiting the direct use of IPS-based estimators~\cite{gupta2023safe}.
Assumptions about how users interact with rankings then give rise to structured estimators with significantly lower variance.

Variance reduction techniques have recently been applied to the pseudoinverse estimator for slates~\cite{Swaminathan2017}, relying on factored policies and focusing on slate-level rewards~\cite{Vlassis2021}.
In contrast, our work relates to general ranking policies and position-level rewards.

The Item-Position Model (IPM)~\cite{Li2018} is also common in the literature, resting on the assumption that an observed reward at position $j$ is independent of other items in the ranked list.
\citet{London2023} provide a seminal study into the use of self-normalised estimators for ranking policies, introducing the SNIPM estimator.
Our analysis naturally extends to include (SN)IPM.

Consider a ranking of size $k$, and let $\bm{a} = (a_1,\ldots,a_k)$ and $\bm{r} = (r_1,\ldots,r_k)$ represent vectorised actions and rewards per position.
The SNIPM estimator is defined as the sum of position-wise SNIPS estimators~\cite[Eq. 1]{London2023}:
\begin{equation}
    \hat{V}_{\rm SNIPM}(\pi,\mathcal{D}) = \sum_{j=1}^{k} \hat{V}^{(j)}_{\rm SNIPS}(\pi,\mathcal{D}) =
    \sum_{j=1}^{k} \frac{\sum\limits_{(x,\bm{a},\bm{r})\in\mathcal{D}}\frac{\pi(a_{j}\mid x,j)}{\pi_0(a_{j}\mid x,j)}r_{j}}{\sum\limits_{(x,\bm{a},\bm{r})\in\mathcal{D}}\frac{\pi(a_{j}\mid x,j)}{\pi_0(a_{j} \mid x, j)}}.
\end{equation}
It is natural to extend the $\beta$-IPS estimator analogously to SNIPM as a local $\bm{\beta}$-IPM estimator where $\bm{\beta} = (\beta_1, \dots, \beta_k)$:
\begin{gather}
    \hat{V}_{\bm{\beta}\rm -IPM}(\pi,\mathcal{D}) =  \sum_{j=1}^{k} \hat{V}^{(j)}_{{\beta_j}\rm-IPS}(\pi,\mathcal{D}) = \nonumber\\
    \sum_{j=1}^{k} \left( \beta_j + \frac{1}{|\mathcal{D}|}\sum_{(x,\bm{a},\bm{r})\in\mathcal{D}}\frac{\pi(a_j\mid x,j)}{\pi_0(a_j\mid x,j)}(r_j-\beta_j)\right).
\end{gather}
Optimising the control variate $\beta_j$ at every position independently is not guaranteed to lead to a global optimum without restrictive ranking policy assumptions---such as the factored policies used by \citet{Vlassis2021}. To derive the globally optimal solution $\bm{\beta}^{\star}$, we would need to account for the dependencies across different ranking positions, which would introduce additional estimation complexity. We leave this as a promising area for future work, and instead propose a robust approximation $\beta_{\indep}^{\star}$ that ignores the dependencies between different positions.

\noindent Following \citet{Gupta2024}, the components $\beta_{\indep, j}^{\star}$ are derived independently for every ranking position as:
\begin{equation}
    \beta_{\indep , j}^{\star} 
    = \frac{\text{Cov}(w_j, w_j r_j)}{\text{Var}(w_j)} 
    = \frac{\mathbb{E}[w_j^2 r_j] - \mathbb{E}[w_j r_j]}{\mathbb{E}[w_j^2] - 1}.
\end{equation}
Even this natural and straightforward extension of the $\beta$-IPS idea comes with strong guarantees regarding asymptotic optimality.

\begin{theorem}[Asymptotic MSE comparison of $\beta_{\indep}^{\star}$-IPM and SNIPM]
Let $\beta_{\indep,j}^\star$ be the baseline that minimises the variance of the $\beta$-IPS estimator at position $j$.
Then, for every position $j \in \{1, \dots, k\}$, the component estimator $\hat{V}^{(j)}_{\beta_{\indep,j}^\star\text{-IPS}}$ asymptotically dominates the corresponding component of SNIPM:
\begin{equation}
    \mathrm{MSE}\!\left(\hat{V}^{(j)}_{\beta_{\indep,j}^\star\text{-IPS}}\right)
    \le
    \mathrm{MSE}\!\left(\hat{V}^{(j)}_{\mathrm{SNIPS}}\right) + O(n^{-2}).
\end{equation}
Consequently, $\bm{\beta}_{\indep}^\star$-IPM achieves position-wise asymptotic dominance over SNIPM at every ranking position.
\end{theorem}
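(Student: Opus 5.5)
The plan is to reduce the statement to Theorem~\ref{thm:mse-betaips-snips}, applied separately at each ranking position $j$. Fix $j\in\{1,\dots,k\}$. From each logged record $(x_i,\bm{a}_i,\bm{r}_i)$ I extract the triple $(x_i, a_{i,j}, r_{i,j})$. I then define the position-level importance weight
\[
w_{i,j} = \frac{\pi(a_{i,j}\mid x_i,j)}{\pi_0(a_{i,j}\mid x_i,j)}.
\]
Since the records are i.i.d., the sequence $(w_{i,j}, r_{i,j})_{i=1}^n$ is i.i.d. as well. With this notation, $\hat V^{(j)}_{\mathrm{SNIPS}} = \bar X_j/\bar W_j$ and $\hat V^{(j)}_{\beta_j\text{-IPS}} = \beta_j + \bar X_j - \beta_j\bar W_j$, where $\bar X_j = \frac1n\sum_i w_{i,j}r_{i,j}$ and $\bar W_j = \frac1n\sum_i w_{i,j}$. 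These are exactly the objects treated in the proof of Theorem~\ref{thm:mse-betaips-snips}. The estimand is the position-$j$ value $V_j = \mathbb{E}[w_{i,j} r_{i,j}]$, which is the expected reward at position $j$ under $\pi$ by the Item-Position Model.

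Next I verify the three hypotheses of Theorem~\ref{thm:mse-betaips-snips} at position $j$. These are bounded rewards $|r_{i,j}|\le R$, bounded weights $0\le w_{i,j}\le W$, and $\mathbb{E}[w_{i,j}]=1$ with a finite second moment. Boundedness is inherited from the same assumptions stated per position. The identity $\mathbb{E}[w_{i,j}]=1$ follows by conditioning on $x$ and summing over $a_j$: $\sum_a \pi_0(a\mid x,j)\,\pi(a\mid x,j)/\pi_0(a\mid x,j)=1$. This requires the usual support condition, namely that $\pi_0(a\mid x,j)>0$ wherever $\pi(a\mid x,j)>0$.

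I will also note that $\beta^\star_{\indep,j}=\mathrm{Cov}(w_j,w_jr_j)/\mathrm{Var}(w_j)$ is exactly the variance-minimising baseline of Proposition~\ref{prop:variance-gap} for this single-position problem. That proposition therefore gives
\[
\mathrm{MSE}\bigl(\hat V^{(j)}_{\beta^\star_{\indep,j}\text{-IPS}}\bigr)\le \mathrm{MSE}\bigl(\hat V^{(j)}_{\beta_j\text{-IPS}}(V_j)\bigr),
\]
since every $\beta_j$-IPS estimator is unbiased for $V_j$. If $\mathrm{Var}(w_j)=0$, then $w_j\equiv 1$ and all the estimators coincide, so the inequality holds trivially.

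With the hypotheses in place, I repeat Steps 1--4 of the earlier proof with the subscript $j$ attached. Step~1 is the exact decomposition $\hat V^{(j)}_{\mathrm{SNIPS}}=\hat V^{(j)}_{\beta\text{-IPS}}(V_j)+R_{n,j}$. Step~2 is the Hoeffding and fourth-moment bound $\mathbb{E}[R_{n,j}^2]=O(n^{-2})$. Step~3 shows the cross term is $O(n^{-2})$. Step~4 compares the result with the optimal baseline. Together these yield the displayed position-wise inequality. The constants depend on $j$ only through $R$ and $W$, so with finitely many positions a single $O(n^{-2})$ term works uniformly over $j$. That uniformity gives the concluding claim of position-wise dominance at every rank.

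I expect the main obstacle to be conceptual rather than computational. The estimand has to be identified correctly: the Item-Position Model is needed so that $\mathbb{E}[w_{i,j}r_{i,j}]$ equals the position-$j$ reward under $\pi$. The marginal weight also has to satisfy $\mathbb{E}[w_{i,j}]=1$ even though the ranking policy is joint over positions. Once those two facts are established, the per-position problem is literally an instance of Theorem~\ref{thm:mse-betaips-snips}. Cross-position dependence is irrelevant here, because each claimed inequality is marginal in $j$.
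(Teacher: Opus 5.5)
Your proposal is correct and follows the paper's proof: both apply Theorem~\ref{thm:mse-betaips-snips} to the marginal single-position problem at each rank $j$, so that SNIPS at rank $j$ matches the additive estimator with baseline $V_j(\pi)$ up to $O(n^{-2})$, and $\beta^\star_{\indep,j}$ is the variance minimiser there. Your extra checks make rigorous what the paper's one-line reduction takes for granted: that $\mathbb{E}[w_{i,j}]=1$ under the support condition, that the Item-Position Model identifies $\mathbb{E}[w_{i,j}r_{i,j}]$ with $V_j(\pi)$, and the degenerate case $\mathrm{Var}(w_j)=0$.
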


\begin{proof}
The proof follows by applying Theorem~\ref{thm:mse-betaips-snips} to each position $j$ individually.
The SNIPM estimator at rank $j$ is structurally identical to SNIPS applied to the marginal distribution of items and rewards at that rank.
From our main result, we established that $\hat{V}_{\mathrm{SNIPS}}^{(j)}$ is asymptotically equivalent to an additive estimator with a fixed baseline equal to the true expected reward at that position, $\beta_j = V_j(\pi)$.
In contrast, $\hat{V}_{\beta_{\indep,j}^\star\text{-IPS}}^{(j)}$ utilises the variance-minimising baseline for that specific position.
Therefore, unless the optimal baseline at rank $j$ coincides exactly with $V_j(\pi)$, the $\bm\beta_{\indep}^{\star}$-IPS component strictly reduces variance compared to the SNIPM component for every position $j\in\{1,\ldots,k\}$.
\end{proof}

\subsection{A practical note on estimation bias for $\beta^{\star}$}
Whilst $\hat{V}_{\beta\text{-IPS}}$ is strictly unbiased for any fixed (data-independent) baseline $\beta$, plugging in the empirical variance-minimiser $\hat{\beta}^{\star}$ estimated from the same dataset $\mathcal{D}$ introduces a correlation between the baseline and the sample estimates.
This results in a finite-sample bias of order $O(n^{-1})$---the same order as the bias inherent to SNIPS.
Consequently, even without correction, the empirical $\hat{\beta}^{\star}$-IPS estimator maintains the same asymptotic MSE convergence rate as SNIPS whilst benefiting from the superior variance constant derived in Proposition~\ref{prop:variance-gap}.
For applications requiring strict finite-sample unbiasedness, one can employ $k$-fold cross-fitting~\cite{Chernozhukov2018}: partitioning $\mathcal{D}$ into $k$ folds, estimating $\hat{\beta}^{\star}$ on $1$ fold, and computing the value estimate on the remaining $k-1$ folds---averaging results.
This procedure restores exact unbiasedness at the cost of a negligible reduction in effective sample size for the variance estimation.
Note that similar optimisation strategies for control variates have been explored in the broader context of off-policy evaluation with slates~\cite{Vlassis2021}.

\subsection{Discussion \& Practical Takeaway}
Our main results are asymptotic, but have direct implications for finite-sample behaviour.
In particular, Proposition~\ref{prop:variance-gap} characterises the exact asymptotic variance gap between additive control variates and SNIPS.
By contrast, Theorem~\ref{thm:mse-betaips-snips} shows that the discrepancy between SNIPS and $\beta$-IPS with $\beta = V(\pi)$ is governed by a remainder term that decays as $O(n^{-2})$ under standard boundedness assumptions.
As a result, the leading $O(n^{-1})$ variance term dominates the MSE for sufficiently large $n$, favouring $\beta^{\star}$-IPS.

What ``sufficiently large'' entails is problem-dependent.
When the logging and target policies are close, $\mathrm{Var}(w)$ is small and the practical difference between SNIPS and $\beta$-IPS can be negligible.
Conversely, in settings with substantial policy mismatch---precisely the realistic regimes where variance reduction is most critical---the variance gap in Proposition~\ref{prop:variance-gap} can be large, implying that $\beta^\star$-IPS attains lower MSE at moderate sample sizes.
The empirical results reported by \citet{Gupta2024} and \citet{Jeunen2024_DeltaOPE} corroborate this theoretical insight.

Our observations suggest that whilst SNIPS remains a stable and convenient default in very small-sample regimes, additive control variates with an estimated optimal baseline offer a strictly better bias–variance trade-off once a modest amount of logged data is available.
These results therefore provide theoretical support for replacing self-normalisation with optimal baseline corrections in practical off-policy evaluation pipelines.

\section{Conclusions \& Outlook}
In this work, we have provided a rigorous theoretical comparison between two dominant approaches for variance reduction in off-policy evaluation: self-normalisation (SNIPS) and additive baseline corrections ($\beta$-IPS).
We proved that the estimator utilising the optimal additive control variate, $\beta^{\star}$-IPS, asymptotically dominates SNIPS in terms of Mean Squared Error.
Furthermore, by deriving the exact analytical variance gap between the two, we revealed that SNIPS is asymptotically equivalent to an additive estimator with a fixed baseline of $\beta=V(\pi)$.
Since the optimal baseline $\beta^{\star}$ coincides with the policy value $V(\pi)$ only under specific distributional conditions, SNIPS is inherently sub-optimal in the general case.
We demonstrated that this dominance extends to the structured ranking setting, by proving that under the standard IPM, $\beta_{\indep}^{\star}$-IPM achieves position-wise asymptotic dominance over SNIPM.

These findings suggest that the community's reliance on self-normalisation as a parameter-free default may be misplaced.
Since $\beta^{\star}$ can be efficiently estimated from logged data with minimal bias, $\beta^{\star}$-IPS and its ranking counterpart offer a superior bias-variance trade-off without requiring complex hyper-parameter tuning.
Finally, while our analysis focused on scalar policy evaluation, these results naturally extend to pairwise comparisons~\cite{Jeunen2024_DeltaOPE}, where variance reduction is even more critical for detecting statistically significant improvements between policies.


\balance

\bibliographystyle{ACM-Reference-Format}
\bibliography{bibliography}

\end{document}